\documentclass[letterpaper, 10 pt, conference]{ieeeconf}  

\usepackage[dvipsnames]{xcolor}
\usepackage{amsmath}
\usepackage{cite}
\usepackage{algorithmic}
\usepackage{graphicx}
\usepackage{amssymb}
\usepackage{placeins}
\usepackage{float}
\usepackage{multirow}
\usepackage{setspace}
\usepackage[normalem]{ulem}
\usepackage{hyperref}

\usepackage[normalem]{ulem}
\usepackage{mathtools}

\usepackage[switch]{lineno} 
\definecolor{dustyred}{RGB}{139, 55, 55}

\newtheorem{definition}{Definition}
\newtheorem{theorem}{Theorem}
\newtheorem{example}{Example}

\newtheorem{corollary}{\textcolor{black}{Corollary}}

\newcommand{\xnonlin}{\xi}
\newcommand{\xlin}{\tilde\xi } 
\newcommand{\x}{\xi }

\newcommand{\revision}[1]{#1}

\newtheorem{remark}{Remark}
\IEEEoverridecommandlockouts                        
\title{\LARGE \bf
Bio-Inspired Event-Based Visual Servoing for Ground Robots
}

\author{
Maral Mordad$^{1,*}$, 
Kian Behzad$^{1,*}$, 
Debojyoti Biswas$^{2}$, 
Noah J. Cowan$^{2,3,\dagger}$, 
and Milad Siami$^{1,\dagger}$%
\thanks{$^{1}$Department of Electrical \& Computer Engineering,
Northeastern University, Boston, MA 02115 USA 
(Emails: {\tt\small \{mordad.m, behzad.k, m.siami\}@northeastern.edu}).}%
\thanks{$^{2}$Laboratory for Computational Sensing and Robotics, 
Johns Hopkins University, Baltimore, MD 21218 USA 
(Email: {\tt\small dbiswas2@jhu.edu}).}%
\thanks{$^{3}$Department of Mechanical Engineering, 
Johns Hopkins University, Baltimore, MD 21218 USA 
(Email: \tt\small ncowan@jhu.edu).}%
\thanks{$^*$These authors contributed equally to this work.}\thanks{$^{\dagger}$N.~J.~Cowan and M.~Siami share senior authorship.}%
}

\begin{document}

\maketitle
\thispagestyle{empty}
\pagestyle{empty}

\allowdisplaybreaks
\begin{abstract}
Biological sensory systems are inherently adaptive, filtering out constant stimuli and prioritizing relative changes, likely enhancing computational and metabolic efficiency. Inspired by active sensing behaviors across a wide range of animals, this paper
\revision{introduces a principled 1D} event-based visual servoing framework for ground robots \revision{operating in structured environments}. Utilizing a Dynamic Vision Sensor (DVS), we demonstrate that by applying a fixed spatial kernel to the asynchronous event stream generated from structured logarithmic intensity-change patterns, the resulting net event flux analytically isolates specific \revision{combinations of kinematic states}. We establish a generalized theoretical bound for this event rate estimator and show that linear and quadratic spatial profiles isolate the robot's velocity and position-velocity product, respectively. Leveraging these properties, we employ a multi-pattern stimulus to directly synthesize a nonlinear state feedback term entirely without traditional state estimation. To overcome the inescapable loss of linear observability at equilibrium inherent in event sensing, we propose a bio-inspired active sensing limit-cycle controller. Experimental validation on a 1/10-scale autonomous ground vehicle confirms the efficacy, extreme low-latency, and computational efficiency of the proposed direct-sensing approach.
\end{abstract}




\section{Introduction}\label{sec:introduction}
Biological sensors function primarily as relative detectors that attenuate constant (i.e., low-frequency or ``DC'') stimuli while emphasizing changes (``AC'') \cite{taylor2007sensory}, a characteristic known as sensory adaptation.
To prevent stationary objects from becoming  ``invisible'' due to these high-pass sensors, animals execute continuous bursts of back-and-forth swimming motions \cite{stamper2012active,biswas2018closed,biswasmode2023} that convert static spatial gradients into detectable temporal signals \cite{nelson1997characterization}. Examples include electric fish
\cite{stamper2012active,biswas2018closed,chen2020tuning}, rodents actively whisking to induce friction-based micro-accelerations \cite{mitchinson2011active}, insects relying on flight-driven optic flow \cite{taylor2007sensory}, blind cave fish gliding to distort pressure waves \cite{windsor2010flow},
and the human visual system employing involuntary microsaccades to prevent perceptual fading \cite{rucci2007miniature}. \revision{While absolute quantities (e.g., position) can be represented at the perceptual level, early sensory processing primarily encodes changes, so absolute information is not directly available at the sensory level \cite{barlow1961possible}.}

Such adaptive, high-pass sensing, while highly efficient and low-latency, poses significant design challenges for robotics. For example, in mobile robotics, conventional state estimation and control methods fail under these conditions, as proved in \cite{biswas2025exact}. Traditional sensors, such as frame-based cameras, IMUs, and LiDAR, rely on power-hungry, absolute sensing. They capture data at fixed intervals regardless of whether a scene is dynamic or static. For this reason, replicating bio-inspired principles is crucial for future developments, which requires a fundamental shift from these traditional methods to relative, event-driven sensing \cite{pfeifer2006body, pfeifer2006morphological,gallego2020event}.

In this work, we employ a neuromorphic event camera, also known as a Dynamic Vision Sensor (DVS). Serving as a technological analog to biological vision, the DVS samples light based on scene dynamics rather than an arbitrary external clock \cite{gallego2020event}. Instead of capturing full images at fixed frame rates, each pixel operates independently and asynchronously. \revision{A pixel triggers a ``spike'' (an address-event) only when its local logarithmic brightness changes by a preset threshold, producing a continuous, asynchronous stream of event tuples, rather than synchronously generated frames at fixed rates. Each spike contains the pixel coordinates, a microsecond-resolution timestamp, and the polarity $(\pm)$ of the brightness change \cite{kragic2002survey}.} Consequently, the DVS offers several key advantages over standard cameras, including microsecond temporal resolution, high dynamic range (HDR), and immunity to motion blur \cite{gallego2020event, posch2014retinomorphic}. Furthermore, by transmitting data exclusively during motion, the DVS performs inherent hardware-level compression, effectively filtering out redundant static backgrounds.

Motivated by these biological principles, we aim to develop \revision{a principled approach} Event-Based Visual Servoing (EBVS) for embodied intelligent agents. \revision{While future applications include multi-dimensional navigation in complex visual scenes for drones and collaborative robots, this initial work focuses on validating the core theory within a constrained 1D structured environment with a structured visual scene.} EBVS entails moving either an event camera or the camera’s visual target such that the generated event stream drives the camera to a desired region~\cite{muthusamy2021neuromorphic}.

\revision{Image-Based Visual Servoing (IBVS)  traditionally relies on frame-based cameras to extract and match geometric features against a stored ``desired'' view \cite{cowan2002visual, hutchinson2002tutorial, kragic2002survey},\revision{\cite{xing2024field}}. 
IBVS cannot be applied directly to an event stream because any desired, static view would be event-free. Nevertheless, prior work has begun closing this gap.} 
To reduce computational overhead, Garcia et al.~\cite{garcia2013event} emulated an event-based camera to trigger IBVS updates only when tracked features crossed spatial boundaries. Similarly, Gil et al.~\cite{gil2014active} utilized event cameras for high-speed feature tracking, falling back to frame-based vision when necessary. Pushing toward fully event-driven perception, Muthusamy et al.~\cite{muthusamy2021neuromorphic} replaced conventional vision processing with event-based feature tracking while retaining traditional IBVS control laws. More recently, learning-based approaches have emerged; for example, Vinod et al.~\cite{vinod2025sebvs} employed deep neural networks to map fused RGB images and simulated event streams directly to motor commands.

In this work, we propose a purely event-driven and computationally efficient EBVS framework. \revision{Our approach extends Kernel-Based Visual Servoing (KBVS) \cite{kallem2007kernel, swensen2010empirical} to event-based sensing. KBVS bypasses explicit feature extraction and segmentation by applying a spatial kernel to process the image as a continuous signal. Here, we apply this kernel-based formulation to asynchronous event streams generated by predefined intensity patterns.} The objective is to regulate the camera position to a desired region of the displayed pattern. Our approach avoids explicit feature extraction, feature tracking, and state estimation. In fact, it relies solely on the net number of events generated by the event camera, leading to a highly computationally efficient implementation. To maintain system observability and overcome sensory adaptation, we adapt the control-theoretic framework proposed in \cite{biswas2025exact}. This allows us to formulate a direct sensor-to-motor active sensing control law with proven stability conditions.

\revision{Our contribution is a direct, kernel-based EBVS framework that bypasses explicit feature tracking. We establish novel analytical bounds for net event counts over structured spatial patterns and synthesize nonlinear feedback entirely within the sensor domain. Our controller extends the active-sensing limit-cycle control introduced in \cite{biswas2025exact} to the physical dynamics of an autonomous ground vehicle.}\footnote{See GitHub repository for video, code, and datasets associated with this work: \url{https://github.com/SiamiLab/BioInspiredEBVS}.}.

\section{Problem Formulation and Modeling} \label{sec:problem_formulation_and_modeling}

In this section, we describe the sensing and control problem for the proposed EBVS system and outline the experimental setup, assumptions, and modeling used in this study. The experimental platform consists of a Prophesee Evaluation Kit 4 (EVK4) \cite{prophesee_evk4}, mounted on a ground vehicle. The vehicle used in this setup is the Quanser QCar \cite{QCar_Quanser}, a 1/10-scale autonomous research platform.

In our setup, the robot is constrained to move along the $x$-axis of its body frame, while the camera faces a monitor displaying a horizontally exponential intensity profile. The overall experimental setup is illustrated in Figure~\ref{fig:overall_setup}. The camera frame is defined such that the $z$-axis points forward, the $x$-axis points to the left, and the $y$-axis points upward. Although the camera and robot frames are located at different points, their $x$-axes are aligned. This alignment is adopted to simplify the notation. Consequently, the variable $x(t)$ denotes both the camera position and the robot position throughout the paper. The objective is to stabilize the camera position $x(t)$ at the desired region of the displayed pattern.

\begin{figure*}[t]
    \centering
    \includegraphics[width=0.985\linewidth]{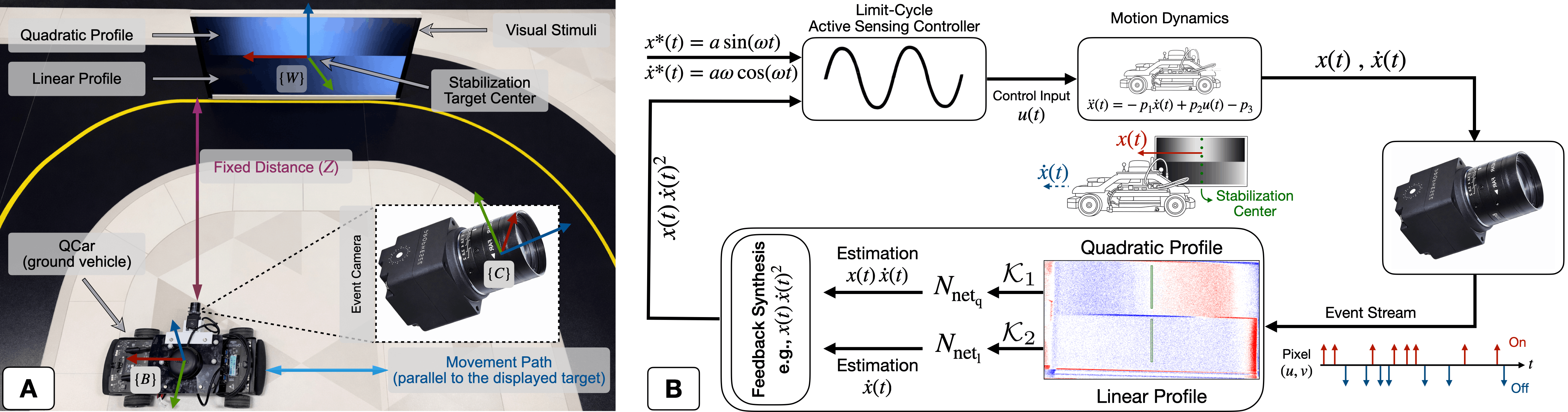}
    \caption{\revision{\textbf{Experimental setup and control architecture.} \textbf{(A)} A ground vehicle moves parallel to a monitor displaying quadratic and linear intensity patterns for an event camera. RGB axes denote the $x, y, z$ coordinate frames for the robot, camera, and world (origin at the pattern's stabilization target). \textbf{(B)} Closed-loop EBVS with active sensing: Net event counts from the stimuli regulate vehicle-driven camera motion to stabilize it at the desired center.}}
    \label{fig:overall_setup}
\end{figure*}

\subsection{Event Camera Model and Event Processing}
\label{sec:event_camera_model_and_event_processing}

We model the event camera using a standard pinhole projection. Let $^{C}\mathbf{p} = [p_x, p_y, p_z]^\top$ denote a 3D point expressed in the camera coordinate frame. The projection of this point onto the image plane with coordinates $^{I}\mathbf{u} = [u, v]^\top$ is given by
\begin{equation*}
    u = -f_x \frac{p_x}{p_z} + o_x, \quad v = -f_y \frac{p_y}{p_z} + o_y,
\end{equation*}
where $f_x$ and $f_y$ are the focal lengths in pixels, and $(o_x, o_y)$ denotes the principal point. Here, $(u,v)$ represent continuous image-plane coordinates, while the camera sensor samples this continuous domain at discrete pixel locations. The image coordinate system is defined such that $u$ increases to the right and $v$ increases downward.

In our setup, the camera moves strictly along the camera's $x$-axis, while maintaining a constant distance $Z$ from the monitor. As the camera moves, the visual pattern shifts laterally across the image plane. The light intensity $I(u,v,t)$ observed by a pixel at $(u,v)$ at time $t$ can be expressed as
\begin{equation}
    I(u,v,t) = I_0\left(u + \mu(t), v\right),
    \label{eq:pinhole_model_x_motion}
\end{equation}
where $I_0(u, v)$ indicates the light intensity of pixel $(u,v)$ when the camera is located exactly at the origin ($x = 0$). Based on the pinhole camera model $\mu(t) = -\frac{f_x}{Z}x(t)$, which represents the horizontal shift of the image pattern on the sensor induced by the camera translation $x(t)$.

In this work, we assume that the robot can move alongside a scene that exhibits an arbitrary horizontally exponential intensity profile. Accordingly, the reference intensity pattern $I_0(u,v)$ is modeled as
\begin{equation}
I_0(u, v) = \exp\!\big(f(u)\big), \quad \forall v,
\label{eq:I_0_profile}
\end{equation}
where $f(u)$ represents the horizontal logarithmic intensity profile.
Unlike a conventional camera, which directly measures absolute intensity values as in \eqref{eq:pinhole_model_x_motion} and \eqref{eq:I_0_profile}, an event camera generates asynchronous events at the pixel level in response to changes in light intensity. An event $e = (u, v, p, t)$ is triggered when the change in the logarithm of the light intensity at a pixel exceeds a predefined threshold. Here, $p \in \{-1, 1\}$ indicates the polarity of the brightness change (decrease or increase), and $t$ is the timestamp recorded with microsecond resolution. Mathematically, this condition is expressed as
\begin{equation}
    |\Delta \log I(u,v)| = |\log I(u,v, t + \Delta t) - \log I(u,v, t)| \geq C,
\end{equation}
where we assume the threshold for both positive and negative events has the same magnitude $C$.

\begin{definition}[Net Event Count] \label{dfn:net_summation}
Let $\mathcal{K} \subset \mathbb{Z}^2$ denote a specified region of pixels on a sensor array. Over a time interval $[t, t+\Delta t]$, let $N_{\mathrm{pos}}$ and $N_{\mathrm{neg}}$ denote the total number of positive ($p = 1$) and negative ($p = -1$) events triggered within $\mathcal{K}$, respectively. The \textbf{net event count} $N_{\mathrm{net}}$ is defined as $    N_{\mathrm{net}} \triangleq N_{\mathrm{pos}} - N_{\mathrm{neg}}$.

Under the ideal generative event model, this discrete count is related to the logarithmic intensity change by
\begin{equation*}
    N_{\mathrm{net}} = \sum_{(u, v) \in \mathcal{K}} \frac{\Delta \log I(u,v)}{C},
\end{equation*}
where $I(u,v)$ is the pixel intensity, $\Delta \log I(u,v)$ is the change in logarithmic intensity over the interval $\Delta t$, and $C > 0$ is the contrast threshold.
\end{definition}

\begin{theorem}
\label{thm:net_count_bound}
Let $x(t) \in C^2(\mathbb{R})$ denote the camera position with instantaneous linear velocity $\dot{x}(t)$ and acceleration $\ddot{x}(t)$. Assume the camera observes a scene with an arbitrary horizontally exponential intensity profile $I_0(u, v) = \exp(f(u)), \: \forall v$, where $f \in C^3(\mathbb{R})$. Let the event-counting kernel $\mathcal{K}$ consist of $N_u \times N_v$ pixels and be horizontally symmetric with respect to the principal point. During a sufficiently small time interval $\Delta t$, the net event count $N_{\mathrm{net}}$ is bounded by%
\begin{equation}
|N_{\mathrm{net}} - M \Delta t| \leq L_{\mathrm{time}} \Delta t^2 + L_{\mathrm{space}} \Delta t,
\label{eq:net_event_estimator}
\end{equation} where the net event rate estimator $M$ is evaluated at the image-plane coordinate $\mu(t)=-\frac{f_x}{Z}x(t)$ as
\begin{equation*}
M \triangleq -\frac{N_u N_v f_x}{C Z} \dot{x}(t) f'(\mu(t)).
\end{equation*}

The estimation error is bounded by a temporal constant $L_{\mathrm{time}}$ and a spatial constant $L_{\mathrm{space}}$, defined as
\begin{align*}
L_{\mathrm{time}}
&\triangleq \left( \tfrac{N_u N_v}{2 C} \right)
\Bigg[
F^{(2)}_{\mathrm{max}} \left( \tfrac{f_x}{Z} v_{\mathrm{max}} \right)^2 + F^{(1)}_{\mathrm{max}} \left( \tfrac{f_x}{Z} a_{\mathrm{max}} \right)
\Bigg], \\
L_{\mathrm{space}}
&\triangleq \left( \tfrac{N_u N_v}{C} \right)
\left( \tfrac{N_u^2}{8} \right)
\left( \tfrac{f_x}{Z} v_{\mathrm{max}} \right)
F^{(3)}_{\mathrm{max}},
\end{align*} 
where $v_{\mathrm{max}}$ and $a_{\mathrm{max}}$ denote the maximum absolute velocity and acceleration of the robot, respectively. $F^{(n)}_{\mathrm{max}} = \sup |f^{(n)}(u)|$ represents the suprema of the spatial derivatives of the logarithmic intensity over the observed interval.
\end{theorem}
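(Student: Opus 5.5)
Starting from the ideal generative model in Definition~\ref{dfn:net_summation}, we write
\[
N_{\mathrm{net}} = \frac{1}{C}\sum_{(u,v)\in\mathcal{K}} \big[f(u+\mu(t+\Delta t)) - f(u+\mu(t))\big],
\]
since $\log I(u,v,t)=f(u+\mu(t))$ by \eqref{eq:pinhole_model_x_motion} and \eqref{eq:I_0_profile}. The sum does not depend on $v$, so it equals $\frac{N_v}{C}\sum_{u} \Delta_u$, where $\Delta_u$ denotes the bracketed temporal increment at column $u$. The estimate then splits into two approximations, handled in order: first in time (replace each increment by its first-order term), then in space (replace the column sum by $N_u$ times the value at the principal point). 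We choose image coordinates so that the principal point is $u=0$ and the columns of $\mathcal{K}$ are symmetric, $u\in\{\pm u_k\}$ with $|u|\le N_u/2$.

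\textbf{Temporal step.} Set $g_u(t)=f(u+\mu(t))$. Since $x\in C^2$ and $f\in C^3$, $g_u\in C^2$, so Taylor's theorem with Lagrange remainder gives $\Delta_u = g_u'(t)\Delta t + \tfrac12 g_u''(\tau)\Delta t^2$. Here
\[
g_u'=f'(u+\mu)\dot\mu, \qquad g_u''=f''(u+\mu)\dot\mu^2+f'(u+\mu)\ddot\mu,
\]
with $\dot\mu=-\tfrac{f_x}{Z}\dot x$ and $\ddot\mu = -\tfrac{f_x}{Z}\ddot x$. Bounding $|g_u''|\le F^{(2)}_{\max}(\tfrac{f_x}{Z}v_{\max})^2+F^{(1)}_{\max}\tfrac{f_x}{Z}a_{\max}$ and summing over the $N_uN_v$ pixels, divided by $C$, produces exactly the $L_{\mathrm{time}}\Delta t^2$ term. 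What remains is $\frac{N_v\Delta t}{C}\dot\mu\sum_u f'(u+\mu)$.

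\textbf{Spatial step.} This is the step that needs care. We expand $f'(u+\mu)$ about $\mu$ to second order:
\[
f'(u+\mu)=f'(\mu)+f''(\mu)u+\tfrac12 f'''(\xi_u)u^2 .
\]
The linear term cancels over the symmetric kernel, because $\sum_u u=0$; this is exactly where horizontal symmetry is used. The constant term gives $N_u f'(\mu)$, and hence $M\Delta t$ after multiplying by $\frac{N_v\Delta t}{C}\dot\mu$, which matches the definition of $M$ including its sign. The remainder is bounded by $\tfrac12 F^{(3)}_{\max}\sum_u u^2$.

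Using $|u|\le N_u/2$ gives $\sum_u u^2\le N_u\cdot N_u^2/4$, so the remainder is at most $\tfrac{N_u^3}{8}F^{(3)}_{\max}$. Multiplying by $\frac{N_v}{C}\tfrac{f_x}{Z}v_{\max}\Delta t$ gives exactly $L_{\mathrm{space}}\Delta t$, with the factor $N_uN_v\cdot N_u^2/8$ as in the statement. A triangle inequality combining the two steps then yields \eqref{eq:net_event_estimator}.

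The main obstacle is the spatial step. First, the symmetric cancellation requires the pixel offsets to be genuinely symmetric about $\mu$'s reference point. Second, the suprema $F^{(n)}_{\max}$ must be taken over the interval actually swept, namely $u+\mu(s)$ for $s\in[t,t+\Delta t]$; this is what "sufficiently small $\Delta t$" and "over the observed interval" should guarantee. We will state this domain explicitly when we invoke the suprema.
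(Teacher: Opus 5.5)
Your proposal is correct and follows the paper's proof essentially step for step. You use a first-order temporal Taylor expansion with Lagrange remainder to produce $L_{\mathrm{time}}\Delta t^2$, then a second-order spatial expansion of $f'$ about $\mu(t)$ in which the linear term cancels by kernel symmetry and the $f'''$ remainder, bounded via $u^2/2\le N_u^2/8$, yields $L_{\mathrm{space}}\Delta t$. You also make explicit two points the paper leaves implicit: that pixel coordinates must be centered at the principal point, and the domain over which the suprema $F^{(n)}_{\mathrm{max}}$ are taken.
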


\begin{proof}    
For a small interval $\Delta t$, the change in logarithmic intensity at pixel coordinate $(u,v)$ can be expanded using a first-order temporal Taylor expansion
$\Delta \log I(u,v) = \frac{d}{dt} \log I(u,v,t)\, \Delta t + R_2(t, \Delta t)$,
where $R_2(t, \Delta t)$ denotes the temporal Lagrange remainder \cite{dym2004principles}. Using \eqref{eq:pinhole_model_x_motion} and \eqref{eq:I_0_profile}, and applying the chain rule, the temporal derivative becomes
$\frac{d}{dt} \log I(u,v,t) = f'(u + \mu(t))\, \dot{\mu}(t)$.
Because $f'$ is evaluated at an offset $u$ from $\mu(t)$, we perform a second-order spatial Taylor expansion of $f'$ around $\mu(t)$ as
$f'(u + \mu(t)) = f'(\mu(t)) + u f''(\mu(t)) + S_2(u, t)$, where $S_2(u, t)$ is the spatial Lagrange remainder. Substituting this into the temporal derivative and summing over all $N_u \!\times\! N_v$ pixels in the kernel $\mathcal{K}$ yields the net event count as
\[
\scalebox{0.8}{
$
\begin{aligned}
N_{\mathrm{net}} &= \frac{1}{C} \sum_{(u,v) \in \mathcal{K}} \Big[ \dot{\mu}(t) \big( f'(\mu(t)) + u f''(\mu(t)) + S_2(u, t) \big) \Delta t + R_2 \Big] \\
&= M \Delta t + \tfrac{\dot{\mu}(t) \Delta t f''(\mu(t))}{C}\! \sum_{(u,v) \in \mathcal{K}}\! u + \tfrac{\dot{\mu}(t) \Delta t}{C}\! \sum_{(u,v) \in \mathcal{K}}\! S_2 + \tfrac{1}{C}\! \sum_{(u,v) \in \mathcal{K}}\! R_2.
\end{aligned}
$
}
\]


Because the kernel $\mathcal{K}$ is horizontally symmetric, the linear sum of horizontal coordinates vanishes ($\sum_{(u,v) \in \mathcal{K}} u = 0$). Isolating the estimation error leaves the two remainder sums as
$|N_{\mathrm{net}} - M \Delta t| \le \frac{|\dot{\mu}(t)| \Delta t}{C} \sum_{(u,v) \in \mathcal{K}} |S_2| + \frac{1}{C} \sum_{(u,v) \in \mathcal{K}} |R_2|$.

To bound the spatial error, based on the Lagrangian remainder $S_2(u, t) = \frac{u^2}{2} f'''(\xi^*)$ for some $\xi^* \in (\mu(t), \mu(t)+u)$. Note the maximum absolute pixel coordinate is $u_{\mathrm{max}} = \frac{N_u}{2}$, ensuring $\frac{u^2}{2} \le \frac{N_u^2}{8}$. Applying the supremum $F^{(3)}_{\mathrm{max}}$ across $N_u N_v$ pixels yields $L_{\mathrm{space}} \Delta t$. Similarly to bound the temporal error, $R_2(t, \Delta t) = \frac{\Delta t^2}{2} [f''(u + \mu(\epsilon^*))(\dot{\mu}(\epsilon^*))^2
+ f'(u + \mu(\epsilon^*))\,\ddot{\mu}(\epsilon^*)]$ for some $\epsilon^* \in (t, t+\Delta t)$. Maximizing this over the interval with $v_{\mathrm{max}}$, $a_{\mathrm{max}}$, $F^{(1)}_{\mathrm{max}}$, and $F^{(2)}_{\mathrm{max}}$ yields $L_{\mathrm{time}} \Delta t^2$. This establishes the generalized bounds and completes the proof. 
\end{proof}

\begin{example}[Quadratic Profile] 
\label{example:gaussian_profile}
For an intensity with a quadratic profile $I_0(u, v) = \exp(-\frac{u^2}{2\sigma^2})$ for some non-zero constant $\sigma \in \mathbb{R}$, the net event estimates the product of camera position and velocity. Defining $f(u) = - \frac{u^2}{2\sigma^2}$, the spatial derivatives are $f'(u) = -\frac{u}{\sigma^2}$ and $f''(u) = -\frac{1}{\sigma^2}$. Notably, $f'''(u) = 0$. Substituting $f'(\mu(t)) = \frac{f_x}{Z \sigma^2} x(t)$ into $M$ yields
\begin{equation}
M_{\mathrm{Quadratic}} 
= \underbrace{-\tfrac{N_u N_v f_x^2}{C Z^2 \sigma^2}}_{:= C_q}\, x(t)\dot{x}(t).
\label{eq:M_Quadratic}
\end{equation}

Since $F^{(3)}_{\mathrm{max}} = 0$, the spatial bound is $L_{\mathrm{space}} = 0$, safely recovering the purely temporal bound for the quadratic profile. Using $f'(u)$ and $f''(u)$, the temporal bound $L_{\mathrm{time}}\Delta t^2$ can be explicitly evaluated, and by normalizing with $C_q \Delta t$, this yields the bound $\revision{\varepsilon_q = }\left(\frac{v_{\max}^2}{2} + \frac{Z N_u a_{\max}}{2 f_x}\right)\Delta t$ on the estimate of $x(t)\dot{x}(t)$ itself.

\end{example}

\begin{example}[Linear Profile]
\label{example:linear_profile}
If the scene exhibits a linear profile $I_0(u, v) = \exp(ku)$ for some non-zero constant $k \in \mathbb{R}$, the net event count becomes strictly proportional to the camera's instantaneous velocity. Furthermore, the spatial estimation error vanishes entirely, and the temporal error depends exclusively on the camera's acceleration.
Let $f(u) = ku$. The spatial derivatives are constant or zero: $f'(u) = k$, $f''(u) = 0$, and $f'''(u) = 0$. Substituting $f'(\mu(t)) = k$ into $M$ yields
\begin{equation}
M_{\mathrm{Linear}} = \underbrace{-\tfrac{N_u N_v f_x k}{C Z}}_{:=C_l}\, \dot{x}(t).
\label{eq:M_linear}
\end{equation}

Because the third derivative is exactly zero globally ($F^{(3)}_{\mathrm{max}} = 0$), the spatial error bound evaluates to zero $L_{\mathrm{space}} = 0$. For the temporal error bound, the suprema of the first and second derivatives are $F^{(1)}_{\mathrm{max}} = |k|$ and $F^{(2)}_{\mathrm{max}} = 0$. Substituting these into the  $L_{\mathrm{time}}$ equation eliminates the velocity-squared term, leaving only the acceleration-dependent term as
  $L_{\mathrm{time}} = |k|  \left( \frac{N_u N_v}{2 C} \right) \left( \frac{f_x}{Z} a_{\mathrm{max}} \right)$.

This demonstrates that for a linear profile, a perfectly symmetric kernel introduces no spatial approximation error, and temporal error arises only if the camera is accelerating. By normalizing $L_{\mathrm{time}}\Delta t^2$ with $C_l \Delta t$, we obtain the bound $\revision{\varepsilon_\ell = }\frac{a_{\max}}{2}\Delta t$ on the estimate of $\dot{x}(t)$ itself.
\end{example}

\subsection{Longitudinal Dynamics of the Robot}
\label{subsec:sysid}


Following \cite{caponio2024modeling}, the longitudinal dynamics of the robot (aka the camera's motion) are governed by the DC motor model as
$ \dot{\omega}(t) = \frac{K_t}{J_w R_a} V_a(t) - \left( \frac{K_t K_v}{R_a J_w} + \frac{B}{J_w} \right) \omega(t) - \frac{T_c}{J_w}$,
where $\omega(t)$ is the motor's angular velocity, $V_a(t)$ is the armature voltage, $K_t$ is the torque constant, $J_w$ is the equivalent inertia, $R_a$ is the armature resistance, $K_v$ is the back-emf constant, $B$ is the viscous friction, and $T_c$ represents Coulomb friction.

The armature voltage and motor angular velocity relate to the input PWM duty cycle $u(t)$ and robot linear velocity $\dot{x}(t)$ via $V_a(t) = V_{\mathrm{bat}}u(t)$ and $\dot{x}(t) = \frac{R_w}{\mathcal{G}}\omega(t)$, respectively. Here, $V_{\mathrm{bat}}$ is the nominal battery voltage, $R_w$ is the wheel radius, and $\mathcal{G}$ is the transmission ratio. Substituting these relationships yields the system's longitudinal dynamics
\begin{equation}
    \ddot{x}(t) = -p_1\dot{x}(t) + p_2u(t) - p_3,
\label{eq:qcar_longitudinal_dynamics}
\end{equation}
where $p_1 = \frac{K_t K_v}{R_a J_w} + \frac{B}{J_w}$, $p_2 = \frac{K_t R_w V_{\mathrm{bat}}}{J_w R_a \mathcal{G}}$, and $p_3 = \frac{R_w T_c}{J_w \mathcal{G}}$ are lumped physical constants.

\subsection{Synthesis of Event-Driven Feedback} 
With the robot's longitudinal dynamics established in \eqref{eq:qcar_longitudinal_dynamics}, the core control challenge is to regulate this system utilizing only the asynchronous event streams modeled in Section~\ref{sec:event_camera_model_and_event_processing}. Conventional state observers are ill-suited for this task; formulating continuous-time estimators for asynchronous, relative neuromorphic data is highly complex, and more critically, the plant fundamentally loses local observability as the camera velocity approaches zero. Rather than attempting explicit state reconstruction, we leverage a dual-pattern visual stimulus. By concurrently computing the event flux over a quadratic kernel ($\mathcal{K}_1$) and a linear kernel ($\mathcal{K}_2$), we independently estimate the position-velocity product $x(t)\dot{x}(t)$ via \eqref{eq:M_Quadratic} and the instantaneous velocity $\dot{x}(t)$ via \eqref{eq:M_linear}.  Multiplying these parallel streams allows us to directly synthesize the higher-order nonlinear feedback term, $x(t)\dot{x}(t)^2$, entirely within the sensor domain.  The subsequent section details how this synthesized signal is embedded into a bio-inspired active sensing control law to actively overcome these fundamental observability limitations.

\section{Active Sensing Strategy}
Prior research by Biswas et al.~\cite{biswas2025exact} demonstrated that systems of the form \eqref{eq:qcar_longitudinal_dynamics} with event-based outputs lose observability under linearization. Moreover, that study established that no dynamic output-feedback controller can asymptotically stabilize the origin of this class of systems ~\cite[Proposition~3.1]{biswas2025exact}. To overcome this limitation, the prior work by Biswas et al.~\cite{biswas2025exact} proposed stabilizing the system around a periodic orbit rather than an equilibrium. That paper considered a simplified, dimensionless mass-damper system to a limit cycle of fixed period. Here, we show how to apply that controller to the physical system \eqref{eq:qcar_longitudinal_dynamics} by stabilizing it to the tunable limit cycle as
\begin{equation*}
\label{eq:limit_cycle}
\x^\star(t) = (x^\star(t),\dot{x}^\star(t)) =
\big(a\sin(\omega t),\, a\omega\cos(\omega t)\big),
\end{equation*}
where $a$ is the amplitude and $\omega$ is the oscillation frequency of the periodic motion.
Assuming full knowledge of the parameters $p_i$, $i=1,2,3$ in \eqref{eq:qcar_longitudinal_dynamics}, the control input is chosen as
\begin{equation}
\label{eq:control_law}
\begin{aligned}
u = \tfrac{1}{p_2}(&
p_1 a\omega \cos(\omega t)
- a\omega^2 \sin(\omega t)
+ p_3 \\
&- K (
x(t)\dot{x}(t)^2
- a^3\omega^2 \sin(\omega t)\cos^2(\omega t)
)
).
\end{aligned}
\end{equation}
where $K$ is the output feedback gain.
The first two terms provide feedforward excitation that counteracts damping and sustains informative motion necessary for observability~\cite{biswas2025exact},  while the constant term \(p_3\) compensates the friction term in \eqref{eq:qcar_longitudinal_dynamics}. The final nonlinear feedback terms vanish on the target orbit and penalize deviations from it. 
\revision{Together, these components render the periodic orbit $\xi^\star(t)$ locally orbitally stable.}

Linearization of the system \eqref{eq:qcar_longitudinal_dynamics} with input \eqref{eq:control_law} around $\x^*(t)$ yields
\begin{gather}
\label{eq:LTV}
  \dot \xlin = A(t)\xlin,
  \text{ where }~\xlin:=\xnonlin-\x^*\\ 
  \label{eq:A_def}
  A(t):=\begin{bmatrix}
    0 &1\\
    -\delta\omega^2\cos^2(\omega t) &-p_1-\delta\omega\sin(2\omega t)\\
  \end{bmatrix},\end{gather} 
with $\xnonlin = (x,\;\dot x)^{\top}$ and the parameter $\delta$ defined as $\delta:= Ka^2$.

The linearized dynamics are time-periodic with period \(\pi/\omega\) in the original time variable \(t\) (equivalently, \(\pi\)-periodic in the rescaled time \(\tau=\omega t\)).


\begin{corollary}\textit{(Exponential orbital stability, extension from Biswas et al.~\cite{biswas2025exact})}
\label{cor:exp_stable}
    The origin of the system \eqref{eq:LTV}  is exponentially stable for $0<\delta\leq  \frac{\sqrt{\omega^2+4p_1^2}-\omega}{2\omega}=:\delta^\dagger$. \revision{Consequently, the periodic orbit $\xi^\star(t) = (x^\star(t), \dot{x}^\star(t))$ of the original nonlinear system is locally exponentially orbitally stable.}
\end{corollary}
\begin{proof}
Consider a quadratic Lyapunov candidate function, $V(\xlin(t))$ given by
\begin{equation*}
    \!\!V(\xlin(t)) = \dfrac{1}{2}\xlin(t)^\top P \xlin(t),\,
    \text{}\, P = \begin{bmatrix}
        1 &\tfrac{1}{p_1} \\
       \tfrac{1}{p_1} &\eta
    \end{bmatrix},\, \eta p_1^2>1.
\end{equation*}

The derivative of $V$ along the trajectories of the linear system \eqref{eq:LTV} is given by
\begin{align*}
    \dot V(\xlin(t)) 
                  = -\xlin(t)^\top Q(t)\xlin(t),
\end{align*}
where $Q(t):=-\frac{1}{2}(PA(t)+A(t)^\top P)$ is
\begin{equation*}
    \begin{bmatrix}
\tfrac{\delta\omega^2}{p_1}\cos^2(\omega t)
&
\tfrac{\delta}{2} q_{12}(t)
\\
\tfrac{\delta}{2} q_{12}(t)
&
\eta p_1-\tfrac{1}{p_1}+\eta\delta\omega\sin(2\omega t)
\end{bmatrix},
\end{equation*}
with $ q_{12}(t) = (\omega/p_1)\sin(2\omega t)+\eta\omega^2\cos^2(\omega t).$
$Q(t)$ is positive semidefinite if both the trace and determinant are nonnegative. Imposing nonnegativity of the trace, we have
\begin{align*}
     \mathrm{Tr}(Q)
    &=
    \tfrac{\delta\omega^2}{p_1}\cos^2(\omega t)
    +
    \eta p_1-\tfrac{1}{p_1}
    +
    \eta\delta\omega\sin(2\omega t)
    \nonumber\\
    &\ge
    \eta p_1-\tfrac{1}{p_1}-\eta\delta\omega.
\end{align*}
Thus if $\delta \le \frac{\eta p_1^2-1}{\eta p_1\omega}$ then $\mathrm{Tr}(Q)\geq 0\,\forall t$. 
The determinant is
\begin{equation*}
\mathrm{Det}(Q)
= -\tfrac{\delta \omega^2}{4 p_1^2} \cos^2(\omega t)\, B(t),
\end{equation*}
where
\[
B(t):=
4(1-\eta p_1^2)
+\delta \left( \eta \omega p_1 \cos(\omega t) - 2 \sin(\omega t) \right)^2.
\]
Since $-\tfrac{\delta \omega^2}{4 p_1^2} \cos^2(\omega t)\le 0\, \forall t$, with equality only at $t=k\pi/\omega$, $k\in\mathbb{Z}$, it suffices to ensure that $B(t)\le 0\, \forall t$. Using
\[
\left( \eta \omega p_1 \cos(\omega t) - 2 \sin(\omega t) \right)^2
\le (\eta p_1\omega)^2+4,
\]
we obtain
\[
B(t)\le -4(\eta p_1^2-1)+\delta\bigl((\eta p_1\omega)^2+4\bigr).
\]
Therefore, a sufficient condition for \(B(t)\le 0\) is
\begin{equation}
    \delta
    \le
    \frac{4(\eta p_1^2-1)}{\eta^2\omega^2 p_1^2+4}.
\label{eq:det_cond}
\end{equation}
Since $\eta p_1\omega>0$ and $(\eta p_1\omega-2)^2\ge 0$, we have
\begin{equation*}
\frac{4}{(\eta p_1\omega)^2+4}\le \frac{1}{\eta p_1\omega}.
\end{equation*}
Therefore,
\[
\frac{4(\eta p_1^2-1)}{(\eta p_1\omega)^2+4}
\le
\frac{\eta p_1^2-1}{\eta p_1\omega},
\]
and hence the determinant condition \eqref{eq:det_cond} implies the trace condition.
Maximizing the right-hand side of \eqref{eq:det_cond} over $\eta$ yields
\begin{equation*}
\eta^\dagger
=
\frac{1}{p_1^2}
\left(
1+\sqrt{1+\frac{4p_1^2}{\omega^2}}
\right),
\end{equation*}
with the corresponding bound
\begin{equation*}
\delta^\dagger
=
\frac{\sqrt{\omega^2+4p_1^2}-\omega}{2\omega}.
\end{equation*}
Thus, choosing $\eta=\eta^\dagger$ in our
candidate Lyapunov function, then if $0<\delta\leq \delta^\dagger$, the conditions on trace and determinant will be satisfied. 

Let $q(t):=\lambda_{\min}(Q(t))\ge 0$. Since $Q(t)$ is continuous and $T$-periodic, with $T=\pi/\omega$, the function $q(t)$ is also continuous and $T$-periodic. Moreover, because $q(t)$ vanishes only at isolated times, it is not identically zero on $[0,T]$, and hence
\begin{equation}
    \mu:=\int_0^T q(s)\,ds>0.
\label{eq:mu_def}
\end{equation}
Now, since
\[
\dot V = -\xlin^\top Q(t)\xlin \le -q(t)|\xlin|^2
\le -\frac{q(t)}{\lambda_{\max}(P)}\,\xlin^\top P\xlin,
\]
the quantity $g(t):=\frac{1}{2}\xlin(t)^\top P \xlin(t)$ satisfies
\[
\dot g(t)\le -\frac{2q(t)}{\lambda_{\max}(P)}\,g(t).
\]
Therefore,
\begin{equation*}
g(t)\le g(0)\exp\!\left(-\frac{2}{\lambda_{\max}(P)}\int_0^t q(s)\,ds\right).
\end{equation*}
Using periodicity and the inequality $\lfloor c\rfloor\geq c-1$,
\[
\int_0^t q(s)\,ds \ge \Big\lfloor \tfrac{t}{T}\Big\rfloor \mu
\ge \frac{\mu}{T}t-\mu,
\]
so
\begin{equation*}
g(t)\le e^{\mu/\lambda_{\max}(P)}g(0)\exp\!\left(-\frac{2\mu}{T\lambda_{\max}(P)}t\right).
\end{equation*}

Since $\lambda_{\min}(P)|\xlin|^2\le 2g(t)\le \lambda_{\max}(P)|\xlin|^2$, it follows that
\begin{equation*}
|\xlin(t)|\le \kappa e^{-\alpha t}|\xlin(0)|,
\end{equation*}
where
\begin{equation}
\label{eq:alpha_def}
\kappa =
\sqrt{\frac{\lambda_{\max}(P)}{\lambda_{\min}(P)}}
\exp\!\left(\frac{\mu}{\lambda_{\max}(P)}\right),
\alpha=
\frac{\mu}{T\lambda_{\max}(P)}.    
\end{equation}
Since \(P\succ0\) and \(\mu>0\), this choice satisfies \(\kappa\ge 1\) and \(\alpha\ge 0\).
Hence, the origin is exponentially stable \cite{ khalil2002nonlinear,mct}, which completes the proof.  
\end{proof}
We next extend Corollary~\ref{cor:exp_stable} to account for bounded input perturbations, which arise in practice due to estimation errors and implementation uncertainties.
\begin{corollary}
[Robustness to Estimation Errors]
\label{cor:robustness}
Suppose the conditions of Corollary~\ref{cor:exp_stable} hold. 
Let \(\widehat{x\dot{x}}\) and \(\widehat{\dot{x}}\) denote the event-based estimates of \(x\dot{x}\) and \(\dot{x}\), respectively, used by the controller \eqref{eq:control_law}, satisfying
\[
|\widehat{x\dot{x}} - x\dot{x}| \le \varepsilon_q,
\qquad
|\widehat{\dot{x}} - \dot{x}| \le \varepsilon_\ell,
\]
as guaranteed by Theorem~\ref{thm:net_count_bound}.
Suppose further that, in a neighborhood of the target orbit where the local
linearization is valid,
\[
|x(t)\dot{x}(t)| \le h_{\max},
\qquad
|\dot{x}(t)| \le v_{\max}.
\]
Then the tracking error satisfies
\begin{equation*}
    \limsup_{t\to\infty}\|\tilde{\xi}(t)\|
    \leq
    \frac{\kappa K}{\alpha}
    \left(
    h_{\mathrm{max}}\varepsilon_\ell
    +
    v_{\mathrm{max}}\varepsilon_q
    +
    \varepsilon_q\varepsilon_\ell
    \right),
\label{eq:robust_bound}
\end{equation*}
where \(\kappa\ge 1\) and \(\alpha>0\) are defined in \eqref{eq:alpha_def}. Consequently, the closed-loop trajectory converges to a neighborhood of the target periodic orbit \(\xi^\star(t)\), with radius proportional to the event-based estimation errors.
\end{corollary}

\begin{proof}
Let
\(
\widehat{x\dot{x}}=x\dot{x}+e_q(t), \:
\widehat{\dot{x}}=\dot{x}+e_\ell(t),
\)
where \(e_q(t)\) and \(e_\ell(t)\) denote the estimation errors associated with
the quadratic and linear event kernels, respectively.
By Theorem~\ref{thm:net_count_bound},
\[
|e_q(t)|\le \varepsilon_q,\quad
|e_\ell(t)|\le \varepsilon_\ell .
\]

Since the event-based implementation of the controller \eqref{eq:control_law} uses estimates of
\(x\dot{x}\) and \(\dot{x}\) rather than their exact values,
the nonlinear feedback
term \(x\dot{x}^2\) is implemented as
\[
(\widehat{x\dot{x}})\;(\widehat{\dot{x}})
=
(x\dot{x}+e_q)(\dot{x}+e_\ell).
\]
Therefore, the implemented input can be written as
$u(t) = u_{\mathrm{nom}}(t) + u_d(t)$ where $u_{\mathrm{nom}}(t)$ is the ideal control law~(8) and
\begin{equation*}
    u_d(t) = -\frac{K}{p_2}
    \bigl[
    (x\dot{x})\,e_\ell + \dot{x}\,e_q + e_q e_\ell
    \bigr].
\end{equation*}
Using
\[
|x(t)\dot{x}(t)|\le h_{\max},
|\dot{x}(t)|\le v_{\max},
|e_q(t)|\le \varepsilon_q,
|e_\ell(t)|\le \varepsilon_\ell,
\]
we obtain
\[
\|u_d\|_\infty
\le
\frac{K}{p_2}
\left(
h_{\max}\varepsilon_\ell
+
v_{\max}\varepsilon_q
+
\varepsilon_q\varepsilon_\ell
\right).
\]
Theorem~\ref{thm:net_count_bound} provides explicit bounds on
\(\varepsilon_q\) and \(\varepsilon_\ell\) in terms of the event accumulation
window, robot velocity, acceleration, and spatial profile parameters. The
constants \(h_{\max}\) and \(v_{\max}\) bound the corresponding kinematic
quantities in the local neighborhood of the target orbit. 
For trajectories evolving within a local neighborhood of the target orbit,
with
\[
|\tilde x(t)|\le \Delta_x,
\qquad
|\dot{\tilde{x}}(t)|\le \Delta_{\dot{x}},
\]
the bounds may be chosen as
\[
v_{\max}=a\omega+\Delta_{\dot{x}},
\]
and
\[
h_{\max}
=
\frac{a^2\omega}{2}
+
a\Delta_{\dot{x}}
+
a\omega\Delta_x
+
\Delta_x\Delta_{\dot{x}}.
\]
Substituting \(u(t)=u_{\mathrm{nom}}(t)+u_d(t)\) into \eqref{eq:qcar_longitudinal_dynamics} and linearizing about the target orbit as in \eqref{eq:LTV} gives
\[
\dot{\xlin}
=
A(t)\xlin
+
\begin{bmatrix}
0\\
p_2u_d(t)
\end{bmatrix},
\]
where \(A(t)\) is defined in \eqref{eq:A_def}. Let \(\Phi(t,\tau)\) denote the state-transition matrix of the nominal system \(\dot{\xlin}=A(t)\xlin\). By Corollary~\ref{cor:exp_stable}, there exist constants \(\kappa\ge 1\) and \(\alpha>0\) such that
\[
\|\Phi(t,\tau)\|
\le
\kappa e^{-\alpha(t-\tau)},
\qquad t\ge \tau\ge 0.
\]
By the variation-of-constants formula,
\[
\xlin(t)
=
\Phi(t,0)\xlin(0)
+
\int_0^t
\Phi(t,\tau)
\begin{bmatrix}
0\\
p_2u_d(\tau)
\end{bmatrix}
d\tau .
\]
Taking norms gives
\[
\begin{aligned}
\|\xlin(t)\|
&\le
\kappa e^{-\alpha t}\|\xlin(0)\|
+
\kappa p_2
\int_0^t
e^{-\alpha(t-\tau)}
|u_d(\tau)|\,d\tau \\
&\le
\kappa e^{-\alpha t}\|\xlin(0)\|
+
\kappa p_2
\|u_d\|_{\infty,[0,t]}
\int_0^t e^{-\alpha(t-\tau)}\,d\tau .
\end{aligned}
\]
Since
\[
\int_0^t e^{-\alpha(t-\tau)}\,d\tau
=
\frac{1-e^{-\alpha t}}{\alpha}
\le
\frac{1}{\alpha},
\]
we obtain
\[
\|\xlin(t)\|
\le
\kappa e^{-\alpha t}\|\xlin(0)\|
+
\frac{\kappa p_2}{\alpha}
\|u_d\|_{\infty,[0,t]}.
\]
Taking the limit superior as \(t\to\infty\) yields
\[
\limsup_{t\to\infty}\|\xlin(t)\|
\le
\frac{\kappa p_2}{\alpha}\|u_d\|_\infty.
\]
Substituting the bound on \(\|u_d\|_\infty\) gives
\[
\limsup_{t\to\infty}\|\xlin(t)\|
\le
\frac{\kappa K}{\alpha}
\left(
h_{\max}\varepsilon_\ell
+
v_{\max}\varepsilon_q
+
\varepsilon_q\varepsilon_\ell
\right).
\]
Thus, the linearized tracking-error dynamics are input-to-state stable with
respect to bounded event-based estimation errors \cite{mct}. Consequently, the closed-loop trajectory is practically orbitally stable, converging to a neighborhood of
the target periodic orbit whose radius scales with the estimation-error bounds.
\end{proof}

\section{Experiments and Simulations}
This section details the implementation and experimental validation of the proposed EBVS framework.

\subsection{Motion Dynamics Parameters}
The lumped parameters $p_1$, $p_2$, and $p_3$ in \eqref{eq:qcar_longitudinal_dynamics} were identified using the data-driven method from \cite{caponio2024modeling}. Due to a slight drivetrain asymmetry, the parameters differ between forward $(p_1,p_2,p_3) = (2.530,\,33.977,\,1.349)$ and backward $(2.954,\,37.497,\,-1.510)$ motion.

The proposed active sensing controller smoothly switches between directional parameters using the transition function $\tanh(\beta \dot{x})$ with \revision{with $\beta = 1000$}.

\subsection{Net Event Count Estimator Results}
To validate the theoretical estimators derived in Section \ref{sec:event_camera_model_and_event_processing}, a dual-pattern stimulus was presented to the event camera, as illustrated in Figure~\ref{fig:pattern_and_event_camera_snapshot}. The stimulus comprises a quadratic intensity profile in the upper region and a linear profile in the lower region of the display. By defining two spatially distinct kernels, $\mathcal{K}_1$ and $\mathcal{K}_2$, corresponding to these regions, the net event count within $\mathcal{K}_1$ provides an estimate of the position-velocity product, $x(t)\dot{x}(t)$, while the net count within $\mathcal{K}_2$ independently estimates the instantaneous velocity, $\dot{x}(t)$.


\begin{figure}[t]
    \centering
    \includegraphics[width=0.98\linewidth]{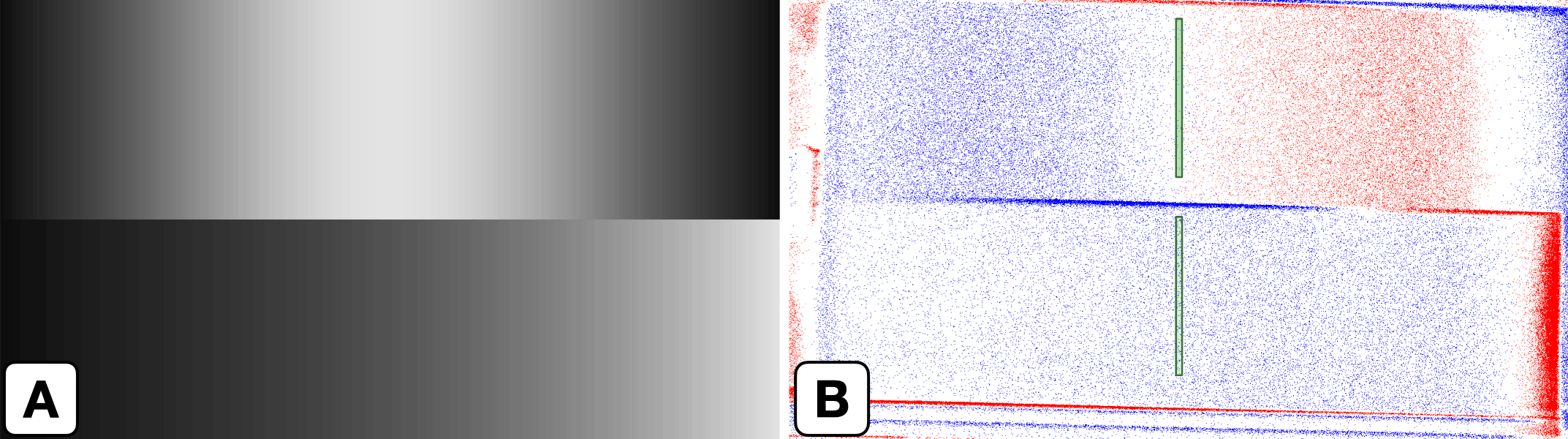}
    \caption{\revision{\textbf{(A)} Dual-pattern display with quadratic (top) and linear (bottom) intensity profiles. \textbf{(B)} Corresponding accumulated event stream, where red and blue dots indicate positive and negative polarity events, respectively and white background shows idle pixels (no event). The robot's motion to the left relative to the pattern generates the observed events. Green rectangles denote the kernels $\mathcal{K}_1$ and $\mathcal{K}_2$. Frame reconstruction is not required in our method; this figure is provided solely for illustration.}}
    \label{fig:pattern_and_event_camera_snapshot}
\end{figure}


\revision{To determine the lumped constants for the estimators $M_{\mathrm{Quadratic}}$ and $M_{\mathrm{Linear}}$ defined in \eqref{eq:M_Quadratic} and \eqref{eq:M_linear}, where the lumped constants correspond to $\alpha_q:=C_q \Delta t$ and $\alpha_l:=C_l \Delta t$, respectively, we employed a data-driven approach. Specifically, we model the net event counts as $N_{\mathrm{net}}^{(q)}(t) \approx \alpha_q\, x(t)\dot{x}(t)$ and $N_{\mathrm{net}}^{(l)}(t) \approx \alpha_l\, \dot{x}(t)$, and identify the constants by minimizing the mean squared error over synchronized time-series data. A median filter is applied to suppress spurious spikes, and the resulting one-parameter least-squares problems are solved via numerical optimization.} Figure~\ref{fig:net_event_count_estimator_results} illustrates the resulting estimates alongside their ground-truth values and the theoretical error bounds in \eqref{eq:net_event_estimator}. In this experiment, the robot executed a stochastic oscillatory motion to evaluate estimator performance across a broad operational envelope. During each forward and backward phase, the maximum velocity and target position were randomized. This approach ensures that the net event count estimators are validated against a diverse range of kinematic profiles.

\begin{figure}[t]
    \centering
    \includegraphics[width=0.98\linewidth]{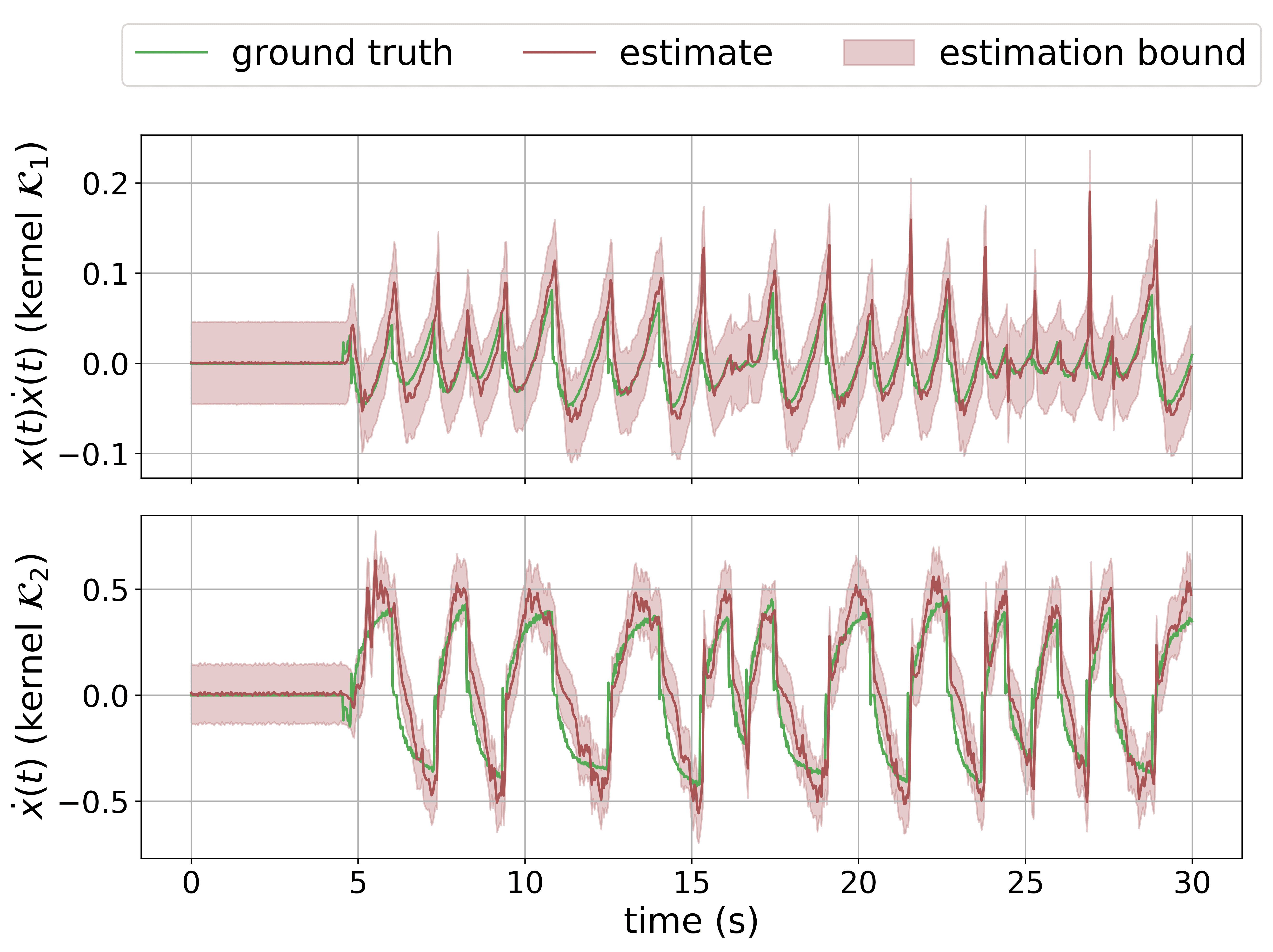}
    \caption{Event-based state estimation vs.\ ground truth. Top: $x(t)\dot{x}(t)$ from $\mathcal{K}_1$. Bottom: $\dot{x}(t)$ from $\mathcal{K}_2$. Shaded regions indicate theoretical bounds. Ground truth for the robot's position is provided by an array of six overhead motion capture cameras, while ground truth velocity is measured directly via the robot's onboard wheel encoders.}
    \label{fig:net_event_count_estimator_results}
\end{figure}

To validate the robustness of the proposed net event count estimator across various operational conditions, we evaluated its performance while varying the spread of the quadratic profile ($\sigma$), the slope coefficient of the linear profile ($k$), and the maximum allowed robot velocity $v_{\max}$,  which also affects the maximum acceleration $a_{\max}$ through the actuator limits. The results of these scenarios are summarized in Table \ref{tab:open_loop_experiments}. Estimation accuracy is quantified using the length-normalized $L_2$ norm of the discrete error vectors. Specifically, let $e \in \mathbb{R}^{N_s}$ represent the sample-wise difference between the estimated and ground-truth signals over $N_s$ total samples. The average estimation errors for the quadratic and linear profiles, denoted as $\|e_{\mathrm{q}}\|$ and $\|e_{\mathrm{l}}\|$ respectively, are formally calculated as $\frac{1}{N_s}\|e\|_2$. 
In these trials, the maximum robot velocity $v_{\max}$ was systematically varied because it can be directly regulated via the underlying motor control commands. As demonstrated in the table, the magnitude of the estimation errors $\|e_{\mathrm{q}}\|$ and $\|e_{\mathrm{l}}\|$ exhibits a strong, direct dependence on $v_{\max}$ and $a_{\max}$. Conversely, the errors remain largely invariant to changes in the displayed pattern parameters, specifically the spread $\sigma$ and the linear slope $k$. This empirical observation validates our proposed theoretical framework, confirming that the estimator's accuracy is fundamentally bounded by the temporal kinematics of the sensor rather than the spatial characteristics of the target.

\begin{table}[t]
    \centering
        \scriptsize
     \caption{Estimation Errors by Target Profile and Velocity}
    \label{tab:open_loop_experiments}
    \begin{tabular}{|c|c|c|c|c|c|c|}
        \hline
        \multirow{2}{*}{\textbf{Trial}} & \multicolumn{4}{c|}{\textbf{Parameters}} & \multicolumn{2}{c|}{\textbf{Results}} \\ \cline{2-7} 
         & $\sigma$ & $k (\times 10^{-3})$ & $v_{\max}$ & $a_{\max}$ ($\approx$) & $\|e_{\mathrm{q}}\|$ & $\|e_{\mathrm{l}}\|$ \\ \hline
        1 & $330$ & $1.299 $ & $0.45$ & $2$ & $0.000807$ & $0.004303$ \\ \hline
        2 & $330$ & $1.299 $ & $0.65$ & $3$ & $0.001573$ & $0.006937$ \\ \hline
        3 & $330$ & $2.205 $ & $0.45$ & $2$ & $0.000649$ & $0.003578$ \\ \hline
        4 & $330$ & $2.205 $ & $0.65$ & $3$ & $0.001657$ & $0.007203$ \\ \hline
        5 & $430$ & $1.299 $ & $0.45$ & $2$ & $0.000994$ & $0.004303$ \\ \hline
        6 & $430$ & $1.299 $ & $0.65$ & $3$ & $0.001491$ & $0.006356$ \\ \hline
        7 & $430$ & $2.205 $ & $0.45$ & $2$ & $0.000985$ & $0.004740$ \\ \hline
        8 & $430$ & $2.205 $ & $0.65$ & $3$ & $0.001532$ & $0.007376$ \\ \hline
    \end{tabular}
\end{table}

\begin{figure*}[t]
    \centering
    \includegraphics[width=0.98\linewidth]{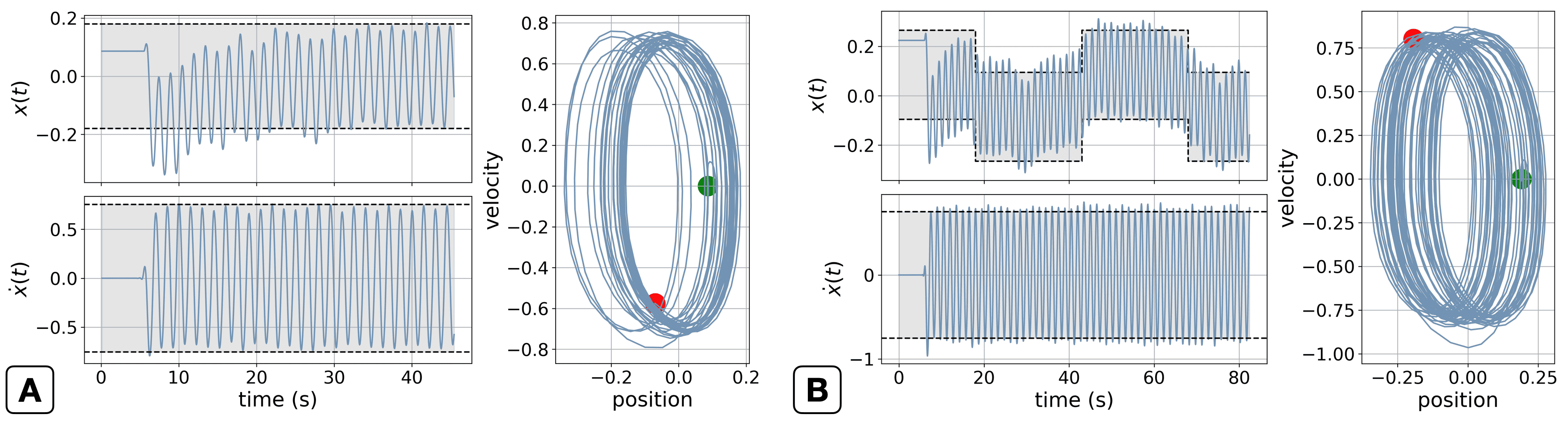}
    \caption{Closed-loop EBVS response for \textbf{(A)} fixed and \textbf{(B)} time-varying stabilization points. In both cases, in the left panel, the top plot shows position and the bottom plot shows velocity, with the desired oscillation radius $a$ indicated by dashed black lines. The right panels present the phase portraits, where green and red markers denote the start and end of the trajectory, respectively.}
    \label{fig:closed_loop_plots_together}
\end{figure*}

\subsection{EBVS with Active Sensing}
Using the dual-displayed pattern, we estimate $x(t)\dot{x}(t)$ and $\dot{x}(t)$ directly from the net event counts to generate feedback for the control law \eqref{eq:control_law}. Figure~\ref{fig:closed_loop_plots_together}(A) shows the closed-loop response, where the robot stabilizes around the center of the quadratic profile (i.e., $x=0$). For this trial, the reference amplitude and oscillation frequency of the periodic motion are set to $a=0.18$ and $\omega = 2\pi/1.5$, respectively. Also, the output gain is set to $K=1.5$, yielding $\delta=0.0486$.

Figure~\ref{fig:closed_loop_plots_together}(B) illustrates a scenario in which the center of the quadratic profile is shifted over time, thereby changing the stabilization point (i.e., $x=0$). Specifically, the target is switched between offsets of $-0.085$~m and $0.085$~m relative to the monitor center. For clarity in reporting, the ground-truth position is expressed in a fixed global frame defined at the center of the monitor, as measured by the motion capture system, even though the controller origin varies over time. For this trial, the same parameter values are used.

\revision{
\begin{remark}[Worst case robustness bounds]
Using the experimental parameters \(a=0.18\), \(\omega=2\pi/1.5\), \(K=1.5\), and the identified forward-motion parameter \(p_1=2.530\), we obtain \(\delta=Ka^2=0.0486\). Choosing the Lyapunov parameter as the critical value \(\eta=\eta^\dagger\) from Corollary~\ref{cor:exp_stable} to guarantee positivity of \(Q(t)\) yields \(\eta^\dagger = 0.401224\), \(\mu = 0.104936\), \(\alpha = 0.116941\), and \(\kappa = 2.638806\). 

Using the experimentally measured bounds \(h_{\max} = 0.0813\), \(v_{\max} = 0.4567\), \(\varepsilon_q = 0.0454\), and \(\varepsilon_\ell = 0.14\), Corollary~\ref{cor:exp_stable} gives
\[
\limsup_{t\to\infty}\|\xlin(t)\|
\le 1.3029.
\]
This estimate guarantees practical orbital stability of the closed-loop system under the experimentally observed event-estimation errors. The bound is highly conservative, since it is derived using worst-case sup-norm estimates and Lyapunov inequalities, but it demonstrates that bounded event-estimation errors induce bounded tracking errors without destroying stability of the target periodic orbit.
\end{remark}
}

\section{Conclusion and Future Work}
\vspace{-0.2em}
This paper presented a purely event-driven, \revision{1D proof-of-concept}, bio-inspired visual servoing framework. We showed that specific spatial profiles isolate distinct \revision{combinations of kinematic states}, enabling direct nonlinear state feedback and maintaining observability at equilibrium via a limit-cycle active sensing controller. The current approach relies on a specially designed visual scene and simple rectangular kernels; future work will relax this assumption using richer kernel geometries as in prior work for traditional cameras~\cite{kallem2007kernel}, extend the framework to multi-dimensional servoing, and explore robustness against dynamic targets in unstructured environments.

\revision{Another future direction is extending the controller to an adaptive formulation to handle model uncertainties, similar to approaches in IBVS~\cite{xing2024field}.}

\section{Acknowledgment}
This work was supported by the Office of Naval Research, United States, under grant N00014-21-1-2431 to NJC and MS, the National Institutes of Health, United States, under grant R01NS147767 to NJC, and the U.S. National Science Foundation under grant award 2208182 to MS.

\IEEEtriggeratref{26}
\bibliographystyle{IEEEtran}
\begin{spacing}{1}
\bibliography{refs.bib}  
\end{spacing}

\end{document}